\newcommand{\norm}[1]{\left\lVert#1\right\rVert}
\newtheorem{theorem}{Theorem}[section]
\newtheorem{lemma}[theorem]{Lemma}
\newtheorem{definition}{Definition}
\newtheorem{proposition}{Proposition}
\newtheorem{remark}{Remark}
\def\FIM{\texttt{FIM}}
\def\RR{\mathbb{R}}
\def\DD{\mathbb{D}}
\def\NN{\mathbb{N}}
\def\PP{\mathbb{P}}
\def\SS{\mathbb{S}}
\def\EE{\mathbb{E}}
\def\Hh{\mathcal{H}}
\def\Pp{\mathcal{P}}
\def\Ww{\mathcal{W}}
\def\Ff{\mathcal{F}}
\def\Nn{\mathcal{N}}
\def\PF{\text{PF}}
\def\PSS{\text{PSS}}
\def\PWG{\text{PWG}}
\def\SW{\text{SW}}
\def\pers{\text{pers}}
\def\Dg{\text{Dg}}
\DeclareMathOperator*{\argmax}{arg\,max}
\title{Persistence Fisher Kernel: A Riemannian Manifold Kernel for Persistence Diagrams}
\author{
   Tam Le \\ 
  RIKEN Center for Advanced Intelligence Project, Japan\\
  \texttt{tam.le@riken.jp} \\
  \And
  Makoto Yamada \\ 
  Kyoto University, Japan \\
  RIKEN Center for Advanced Intelligence Project, Japan \\
  \texttt{makoto.yamada@riken.jp} \\
}
\begin{document}

\maketitle

\begin{abstract}
Algebraic topology methods have recently played an important role for statistical analysis with complicated geometric structured data such as shapes, linked twist maps, and material data. Among them, \textit{persistent homology} is a well-known tool to extract robust topological features, and outputs as \textit{persistence diagrams} (PDs). However, PDs are point multi-sets which can not be used in machine learning algorithms for vector data. To deal with it, an emerged approach is to use kernel methods, and an appropriate geometry for PDs is an important factor to measure the similarity of PDs. A popular geometry for PDs is the \textit{Wasserstein metric}. However, Wasserstein distance is not \textit{negative definite}. Thus, it is limited to build positive definite kernels upon the Wasserstein distance \textit{without approximation}. In this work, we rely upon the alternative \textit{Fisher information geometry} to propose a positive definite kernel for PDs \textit{without approximation}, namely the Persistence Fisher (PF) kernel. Then, we analyze eigensystem of the integral operator induced by the proposed kernel for kernel machines. Based on that, we derive generalization error bounds via covering numbers and Rademacher averages for kernel machines with the PF kernel. Additionally, we show some nice properties such as stability and infinite divisibility for the proposed kernel. Furthermore, we also propose a linear time complexity over the number of points in PDs for an approximation of our proposed kernel with a bounded error. Throughout experiments with many different tasks on various benchmark datasets, we illustrate that the PF kernel compares favorably with other baseline kernels for PDs.
\end{abstract}

\section{Introduction}

Using algebraic topology methods for statistical data analysis has been recently received a lot of attention from machine learning community \citep{chazal2015subsampling, kwitt2015statistical, bubenik2015statistical, kusano2016persistence, chen2016clustering, carriere17asliced, hofer2017deep, adams2017persistence, kusano2017kernelJMLR}. 
Algebraic topology methods can produce a robust descriptor which can give useful insight when one deals with complicated geometric structured data such as shapes, linked twist maps, and material data. More specifically, algebraic topology methods are applied in various research fields such as biology \citep{kasson2007persistent, xia2014persistent, cang2015topological}, brain science \citep{singh2008topological, lee2011discriminative, petri2014homological}, and information science \citep{de2007coverage, carlsson2008local}, to name a few.

In algebraic topology, \textit{persistent homology} is an important method to extract robust topological information, it outputs point multisets, called \textit{persistence diagrams} (PDs) \citep{edelsbrunner2000topological}. Since PDs can have different number of points, it is not straightforward to plug PDs into traditional statistical machine learning algorithms, which often assume a vector representation for data. 

\paragraph{Related work.} There are two main approaches in topological data analysis: (i) explicit vector representation for PDs such as computing and sampling functions built from PDs (i.e. persistence lanscapes \citep{bubenik2015statistical}, tangent vectors from the mean of the square-root framework with principal geodesic analysis \citep{anirudh2016riemannian}, or persistence images \citep{adams2017persistence}), using points in PDs as roots of a complex polynomial for concatenated-coefficient vector representations \citep{di2015comparing}, or using distance matrices of points in PDs for sorted-entry vector representations \citep{carriere2015stable}, (ii) implicit representation via kernels such as the Persistence Scale Space (PSS) kernel, motivated by a heat diffusion problem with a Dirichlet boundary condition \citep{reininghaus2015stable}, the Persistence Weighted Gaussian (PWG) kernel via kernel mean embedding \citep{kusano2016persistence}, or the Sliced Wasserstein (SW) kernel under Wasserstein geometry \citep{carriere17asliced}. In particular, geometry on PDs plays an important role. One of the most popular geometries for PDs is the Wasserstein metric \citep{villani2003topics, PeyreCuturiBook}. However, it is well-known that the Wasserstein distance is not \textit{negative definite} \citep{reininghaus2015stable} (Appendix A). Consequently, we may not obtain positive definite kernels, built upon from the Wasserstein distance. Thus, it may be necessary to \textit{approximate} the Wasserstein distance to achieve positive definiteness for kernels, relied on Wasserstein geometry. For example, \citep{carriere17asliced} used the SW distance---an \textit{approximation} of Wasserstein distance---to construct the positive definite SW kernel.

\paragraph{Contributions.} In this work, we focus on the implicit representation via kernels for PDs approach, and follow \cite{anirudh2016riemannian} to explore an alternative Riemannian geometry, namely the Fisher information metric \citep{amari2007methods, lee2006riemannian} for PDs. Our contribution is two-fold: (i) we propose a positive definite kernel, namely the Persistence Fisher (PF) kernel for PDs. The proposed kernel well preserves the geometry of the Riemannian manifold since it is directly built upon the Fisher information metric for PDs \textit{without approximation}. (ii) We analyze the eigensystem of the integral operator induced by the PF kernel for kernel machines. Based on that, we derive generalization error bounds via covering numbers and Rademacher averages for kernel machines with the PF kernel. Additionally, we provide some nice properties such as a bound for the proposed kernel induced squared distance with respect to the geodesic distance which can be interpreted as stability in a similar sense as the work of \citep{kwitt2015statistical, reininghaus2015stable} with Wasserstein geometry, and infinite divisibility for the proposed kernel. Furthermore, we describe a linear time complexity over the number of points in PDs for an approximation of the PF kernel with a bounded error via Fast Gauss Transform \citep{greengard1991fast, morariu2009automatic}.  


\section{Background}\label{sec:Background}

\begin{figure}
  \begin{center}
    \includegraphics[width=0.4\textwidth]{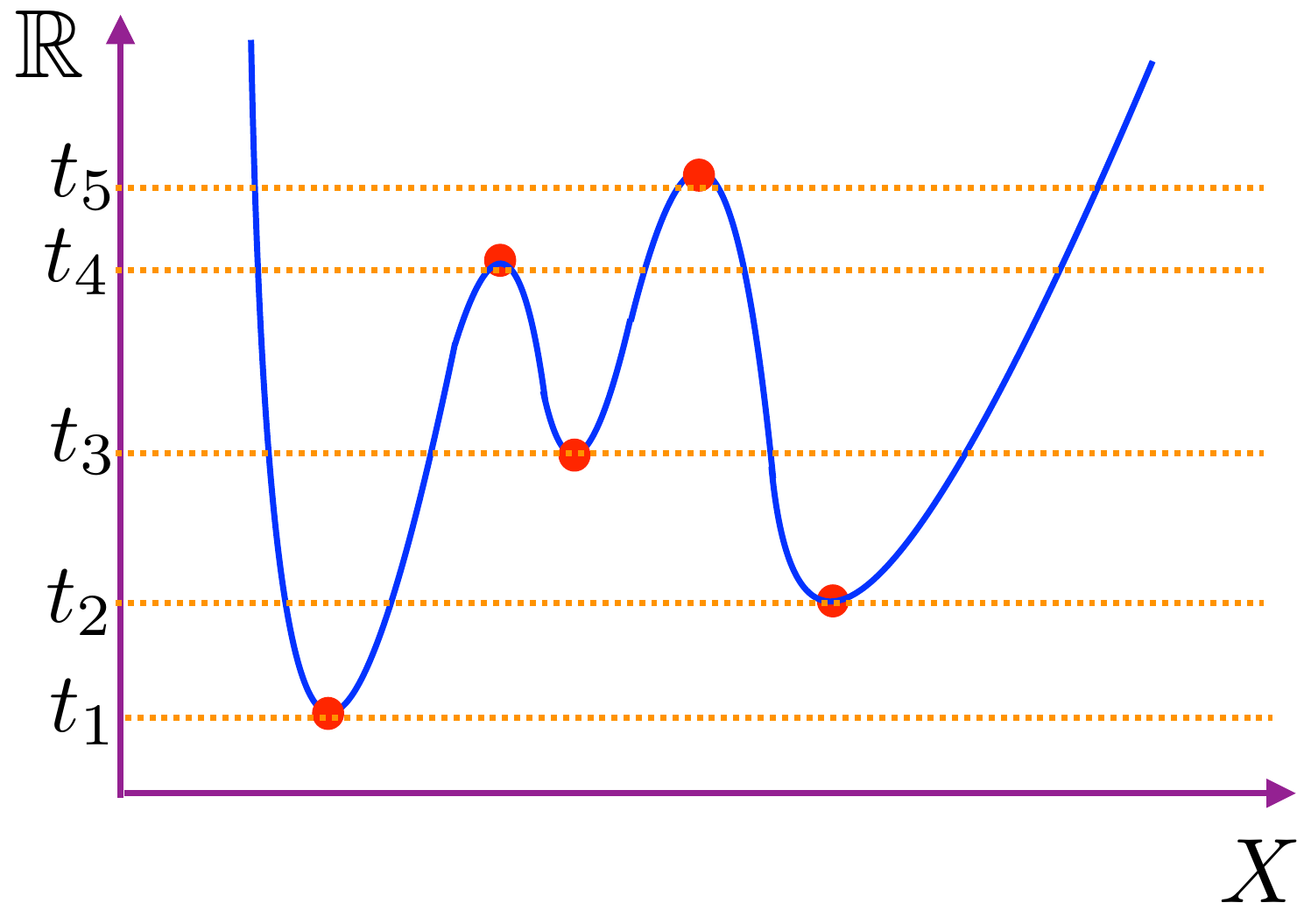}
  \end{center}
  \vspace{-15pt}
  \caption{An illustration of a persistence diagram on a real-value function $f$. The orange horizontal lines are the boundaries of sublevel sets $f^{-1}\!\left((-\infty, t]\right)$. For the $0$-dimensional topological features (connected components), the topological events of births are happened at $t = t_1, t_2, t_3$ and their corresponding topological events of deaths are occurred at $t = +\infty, t_5, t_4$ respectively. Therefore, the persistent diagram of $f$ is $\Dg f = \left\{(t_1, +\infty), (t_2, t_5), (t_3, t_4) \right\}$.}
  \label{fg:Persistence}
  \vspace{-15pt}
\end{figure}

\paragraph{Persistence diagrams.} Persistence homology (PH) \citep{edelsbrunner2008persistent} is a popular technique to extract robust topological features (i.e. connected components, rings, cavities) on real-value functions. Given $f: X \mapsto \RR$, PH considers the family of \textit{sublevel sets} of $f$ (i.e. $f^{-1}((-\infty, t]), t\in \RR$) and records all \textit{topological events} (i.e. births and deaths of topological features) in $f^{-1}((-\infty, t])$ when $t$ goes from $-\infty$ to $+\infty$. PH outputs a 2-dimensional point multiset, called \textit{persistence diagram} (PD), illustrated in Figure \ref{fg:Persistence}, where each 2-dimensional point represents a lifespan of a particular topological feature with its birth and death time as its coordinates. 

\paragraph{Wasserstein geometry.} Persistence diagram $\Dg$ can be considered as a discrete measure $\mu_{\Dg} = \sum_{u \in \Dg} \delta_u$ where $\delta_u$ is the Dirac unit mass on $u$. Therefore, the bottleneck metric (a.k.a. $\infty$-Wasserstein metric) is a popular choice to measure distances on the set of PDs with bounded cardinalities. Given two PDs $\Dg_{i}$ and $\Dg_{j}$, the bottleneck distance $\Ww_{\infty}$ \citep{cohen2007stability, carriere17asliced, adams2017persistence} is defined as 
\[
\Ww_{\infty}\left(\Dg_{i}, \Dg_{j} \right) = \inf_{\gamma} \sup_{x \in \Dg_i \cup \Delta} \norm{x - \gamma(x)}_{\infty},
\]
where $\Delta := \left\{ \left(a, a\right) \mid a \in \RR\right\}$ is the diagonal set, and $\gamma : \Dg_{i} \cup \Delta \rightarrow \Dg_{j} \cup \Delta$ is bijective. 

\paragraph{Fisher information geometry.} Given a bandwidth $\sigma > 0$, for a set $\Theta$, one can smooth and normalize $\mu_{\Dg}$ as 
follows, 
\begin{equation}\label{equ:NSprob}
\rho_{\Dg} := \left[ \frac{1}{Z} \sum_{u \in \Dg} \mathtt{N}(x; u, \sigma I) \right]_{x \in \Theta}, 
\end{equation}
where $Z = \int_{\Theta} \sum_{u \in \Dg} \mathtt{N}(x; u, \sigma I) \mathrm{d}x $, $\mathtt{N}$ is a Gaussian function and $I$ is an identity matrix. Therefore, each PD can be regarded as a point in a probability simplex $\PP := \left\{ \rho \mid \int \rho(x) \mathrm{d}x = 1, \rho(x) \ge 0\right\}$\footnote{In case, $\Theta$ is an \textit{infinite} set, then the corresponding probability simplex $\PP$ has \textit{infinite} dimensions.}. In case, one chooses $\Theta$ as an entire Euclidean space, each PD turns into a probability distribution as in \citep{anirudh2016riemannian, adams2017persistence}.

Fisher information metric (FIM)\footnote{FIM is also known as a particular pull-back metric on Riemannian manifold \citep{le2015unsupervised}.} is a well-known Riemannian geometry on the probability simplex $\PP$, especially in information geometry \citep{amari2007methods}. Given two points $\rho_i$ and $\rho_j$ in $\PP$, the Fisher information metric is defined as
\begin{equation}\label{equ:dFIM}
d_{\Pp}(\rho_i, \rho_j) = \arccos \! \left(\int \!\!\sqrt{\rho_i(x) \rho_j(x)} \mathrm{d}x \right).
\end{equation}

\section{Persistence Fisher Kernel (PF Kernel)}\label{sec:RMK}

In this section, we propose the Persistence Fisher (PK) kernel for persistence diagrams (PDs).

For the bottleneck distance, two PDs $\Dg_{i}$ and $\Dg_{j}$ may be two discrete measures with different masses. So, the transportation plan $\gamma$ is bijective between $\Dg_{i} \cup \Delta$ and $\Dg_{j} \cup \Delta$ instead of between $\Dg_{i}$ and $\Dg_{j}$. \cite{carriere17asliced}, for instance, used Wasserstein distance between $\Dg_{i}$ and $\Dg_{j}$ where its transportation plans operate between $\Dg_{i} \cup \Dg_{j\Delta}$ and $\Dg_{j} \cup \Dg_{i\Delta}$ (nonnegative, not necessarily normalized measures with same masses). Here, we denote $\Dg_{i\Delta} := \left\{\Pi_{\Delta}(u) \mid u \in \Dg_{i} \right\}$ where $\Pi_{\Delta}(u)$ is a projection of a point $u$ on the diagonal set $\Delta$. Following this line of work, we also consider a distance between two measures $\Dg_{i} \cup \Dg_{j\Delta}$ and $\Dg_{i} \cup \Dg_{j\Delta}$ as a distance between $\Dg_{i}$ and $\Dg_{j}$ for the Fisher information metric. 

\begin{definition}\label{def:RM}
Let $\Dg_{i}, \Dg_{j}$ be two finite and bounded persistence diagrams. The Fisher information metric between $\Dg_{i}$ and $\Dg_{j}$ is defined as follows,
\begin{equation}\label{equ:RMPD}
d_\FIM(\Dg_{i}, \Dg_{j}) := d_{\Pp} \! \left(\rho_{\left(\Dg_{i} \cup \Dg_{j\Delta}\right)}, \rho_{\left(\Dg_{j} \cup \Dg_{j\Delta}\right)} \right).
\end{equation}
\end{definition}

\begin{lemma}\label{lem:RM}
Let $\DD$ be the set of bounded and finite persistent diagrams. Then, $\left(d_{\FIM} - \tau \right)$ is negative definite on $\DD$ for all $\tau \ge \frac{\pi}{2}$.
\end{lemma}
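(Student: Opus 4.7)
My plan is to first observe the bound $d_\FIM\in[0,\pi/2]$, then note that conditional negative definiteness (CND) is shift-invariant, and finally reduce CND of $d_\FIM$ to positive semidefiniteness of a single auxiliary kernel via the Taylor expansion of $\arcsin$.

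Since $\sqrt{\rho_i},\sqrt{\rho_j}$ are unit vectors in $L^2$ with nonnegative entries, their inner product $\int\!\sqrt{\rho_i\rho_j}\,dx$ lies in $[0,1]$, so $d_\Pp\in[0,\pi/2]$ and therefore $d_\FIM\in[0,\pi/2]$. In particular $d_\FIM-\tau\le 0$ pointwise whenever $\tau\ge\pi/2$, which explains the role of the constraint. Moreover CND is invariant under adding a real constant $-\tau$, because $\tau(\sum_ic_i)^2=0$ on the admissible set $\{\sum_ic_i=0\}$, so the lemma reduces to showing that $d_\FIM$ itself is CND on $\DD$.

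For the CND argument I would exploit the Taylor expansion $\arcsin(x)=\sum_{n\ge 0}a_n x^{2n+1}$ with coefficients $a_n=\frac{(2n)!}{4^n(n!)^2(2n+1)}>0$, valid on $[0,1]$. Writing
\[
K_{ij}:=\int\!\sqrt{\rho_{\Dg_i\cup\Dg_{j\Delta}}\,\rho_{\Dg_j\cup\Dg_{i\Delta}}}\,dx\in[0,1]
\]
and using $\arccos=\pi/2-\arcsin$, one has $d_\FIM(\Dg_i,\Dg_j)=\tfrac{\pi}{2}-\sum_{n\ge 0}a_n K_{ij}^{2n+1}$. For coefficients $c_1,\ldots,c_N$ with $\sum_ic_i=0$,
\[
\sum_{i,j}c_ic_j\,d_\FIM(\Dg_i,\Dg_j)=-\sum_{n\ge 0}a_n\sum_{i,j}c_ic_j K_{ij}^{2n+1}.
\]
Thus CND of $d_\FIM$ would follow once the matrix $[K_{ij}]$ is shown to be positive semidefinite: by the Schur product theorem every odd Hadamard power $[K_{ij}^{2n+1}]$ is then PSD, and the expression above becomes a nonpositive combination.

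The main obstacle---and the only step I expect to be nontrivial---is verifying that $[K_{ij}]$ is PSD, because $\sqrt{\rho_{\Dg_i\cup\Dg_{j\Delta}}}$ and $\sqrt{\rho_{\Dg_j\cup\Dg_{i\Delta}}}$ depend on \emph{both} indices through the diagonal augmentation, so $K$ is not manifestly a Gram matrix. I would handle this by constructing a single-index Hilbert space embedding: for the given finite configuration $\Dg_1,\ldots,\Dg_N$, augment each $\Dg_i$ symmetrically with the diagonal projections of the whole collection, producing unit vectors $\hat\phi_i\in L^2$ that depend only on $i$, whose Gram matrix $[\langle\hat\phi_i,\hat\phi_j\rangle]$ is automatically PSD. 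I would then verify that this lifted kernel coincides with $K_{ij}$ (or at least PSD-dominates it) by splitting the integrand $\sqrt{(U_i^2+V_j^2)(U_j^2+V_i^2)}$---with $U_i$ and $V_i$ the square-root smoothings of $\mu_{\Dg_i}$ and $\mu_{\Dg_{i\Delta}}$ respectively---into its cross terms and applying a Cauchy--Schwarz-type inequality. Bridging the pair-dependent kernel $K_{ij}$ to an honest Gram matrix is the heart of the argument.
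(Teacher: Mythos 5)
Your reduction is sound and, up to cosmetic differences, it is exactly the paper's route: the paper also Taylor-expands $\arccos$ (equivalently your $\tfrac{\pi}{2}-\arcsin$), notes that all coefficients of $\tau-\arccos(\xi)$ are nonnegative for $\tau\ge\tfrac{\pi}{2}$ and $\xi\in[0,1]$, and invokes Schoenberg's theorem on the unit sphere of a Hilbert space to get positive definiteness of $\tau-d_{\FIM}$; your Schur-product phrasing is just an unpacking of Schoenberg's criterion, and your remark that the constant $\tau$ is harmless on $\{\sum_i c_i=0\}$ matches the paper's handling of the shift.

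The difference lies in the step you yourself single out as the heart of the matter, and there your proposal is incomplete. Both Schoenberg's theorem and the Schur-product argument require $[K_{ij}]$ to be a Gram matrix of unit vectors indexed by a \emph{single} diagram, but $\sqrt{\rho_{(\Dg_i\cup\Dg_{j\Delta})}}$ depends on $j$ as well as $i$, both through the added diagonal points and through the normalizing constant $Z$. Your proposed repair---augmenting each $\Dg_i$ with the diagonal projections of the entire collection to obtain single-index vectors $\hat\phi_i$---produces a genuinely different kernel: the extra diagonal mass and the altered normalization mean $\langle\hat\phi_i,\hat\phi_j\rangle\ne K_{ij}$ in general, and you exhibit no Cauchy--Schwarz manipulation that identifies $[K_{ij}]$ with a Gram matrix or establishes its positive semidefiniteness (entrywise domination by a PSD matrix would not suffice in any case). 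It is worth knowing that the paper's own proof does not close this gap either: it writes $\bar\rho_i,\bar\rho_j$ and applies Schoenberg's theorem as though these were fixed points on the sphere depending only on one index, silently passing over the pair-dependence. So you have correctly located the one nontrivial step, but neither your sketch nor the paper supplies an argument for it, and as written your proof cannot be considered complete.
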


\begin{proof}
Let consider the function $\tau - \arccos(\xi)$ where $\tau \ge \frac{\pi}{2}$ and $\xi \in \left[0, 1\right]$, then apply the Taylor series expansion for $\arccos(\xi)$ at $0$, we have 
\[
\tau - \arccos(\xi) = \tau - \frac{\pi}{2} + \sum_{i=0}^{\infty} \frac{(2i)!}{2^{2i} (i!)^2 (2i + 1)} x^{2i + 1}.
\] 
So, all coefficients of the Taylor series expansion are nonnegative. Following \citep{schoenberg1942positive} (Theorem 2, p. 102), for $\tau \ge \frac{\pi}{2}$ and $\xi \in \left[0, 1\right]$, $\tau - \arccos(\xi)$ is positive definite. Consequently, $\arccos(\xi) - \tau$ is negative definite. Furthermore, for any PDs $\Dg_i$ and $\Dg_j$ in $\DD$, we have 
\[
0 \le \int \sqrt{\bar{\rho}_i(x) \bar{\rho}_j(x)} \mathrm{d}x \le 1,
\]
where we denote $\bar{\rho}_i = \rho_{\left(\Dg_{i} \cup \Dg_{j\Delta}\right)}$ and $\bar{\rho}_j = \rho_{\left(\Dg_{j} \cup \Dg_{i\Delta}\right)}$. The lower bound is due to nonnegativity of the probability simplex while the upper bound follows from the Cauchy-Schwarz inequality. Hence, $d_{\FIM} - \tau$ is negative definite on $\DD$ for all $\tau \ge \frac{\pi}{2}$.
\end{proof}

Based on Lemma \ref{lem:RM}, we propose a positive definite kernel for PDs under the Fisher information geometry by following \citep{berg1984harmonic} (Theorem 3.2.2, p.74), namely the Persistence Fisher kernel,
\begin{equation}\label{equ:RMKernel}
k_{\PF}(\Dg_i, \Dg_j) := \exp\left(- td_{\FIM}(\Dg_i, \Dg_j)\right),
\end{equation} 
where $t$ is a positive scalar since we can rewrite the Persistence Fisher kernel as $k_{\PF}(\Dg_i, \Dg_j) = \alpha \exp\left(- t\left(d_{\FIM}(\Dg_i, \Dg_j) - \tau\right) \right)$ where $\tau \ge \frac{\pi}{2}$ and $\alpha = \exp\left(-t \tau \right) > 0$.

To the best of our knowledge, the $k_{\PF}$ is the first kernel relying on the Fisher information geometry for measuring the similarity of PDs. Moreover, the $k_{\PF}$ is positive definite \textit{without any approximation}.

\begin{remark} 
Let $\SS_{+} := \left\{\nu \mid \int \nu^2(x) \mathrm{d}x = 1, \nu(x) \ge 0 \right\}$ be the positive orthant of the sphere, and define the Hellinger mapping $h(\cdot) := \sqrt{\cdot}$, where the square root is an element-wise function which transforms the probability simplex $\PP$ into $\SS_{+}$. The Fisher information metric between $\rho_i$ and $\rho_j$ in $\PP$ (Equation (\ref{equ:dFIM})) is equivalent to the geodesic distance between $h(\rho_i)$ and $h(\rho_j)$ in $\SS_{+}$. From \citep{levy1965processus}, the geodesic distance in $\SS_{+}$ is a measure definite kernel distance. Following \citep{istas2012manifold} (Proposition 2.8), the geodesic distance in $\SS_{+}$ is negative definite. This result is also noted in \citep{feragen2015geodesic}. From \citep{berg1984harmonic} (Theorem 3.2.2, p.74), the Persistence Fisher kernel is positive definite. Therefore, our proof technique is not only independent and direct for the Fisher information metric on the probability simplex without relying on the geodesic distance on $\SS_{+}$, but also valid for the case of infinite dimensions due to \citep{schoenberg1942positive} (Theorem 2, p. 102).
\end{remark}

\begin{remark} 
A closely related kernel to the Persistence Fisher kernel is the diffusion kernel \citep{lafferty2005diffusion} (p. 140), based on the heat equation on the Riemannian manifold defined by the Fisher information metric to exploit the geometric structure of statistical manifolds. A generalized family of kernels for the diffusion kernel is exploited in \citep{jayasumana2015kernel, feragen2015geodesic}. To the best of our knowledge, the diffusion kernel has not been used for measuring the similarity of PDs. If one uses the Fisher information metric (Definition \ref{def:RM}) for PDs, and then plug the distance into the diffusion kernel, one obtains a similar form to our proposed Persistence Fisher kernel. A slight difference is that the diffusion kernel relies on $d_{\FIM}^2$ while the Persistence Fisher kernel is built upon $d_{\FIM}$ itself. However, the Persistence Fisher kernel is positive definite while it is unclear whether the diffusion kernel is positive definite\footnote{Although the heat kernel is positive definite, the diffusion kernel on the probability simplex---the heat kernel on multinomial manifold---does not have an explicit form. In practice, the diffusion kernel equation \citep{lafferty2005diffusion} (p. 140) is only its first-order approximation.}.
\end{remark}

\paragraph{Computation.} Given two finite PDs $\Dg_i$ and $\Dg_j$ with cardinalities bounded by $N$, in practice, we consider a finite set $\Theta := \Dg_i \cup \Dg_{j\Delta} \cup \Dg_j \cup \Dg_{i\Delta}$ without multiplicity in $\RR^2$ for smoothed and normalized measures $\rho_{(\cdot)}$ (Equation \ref{equ:NSprob})\footnote{We leave the computation with an \textit{infinite} set $\Theta$ for future work.}. Then, let $m$ be the cardinality of $\Theta$, we have $m \le 4N$. Consequently, the time complexity of $\rho_{(\cdot)}$ is $O(Nm)$. For acceleration, we propose to apply the Fast Gauss Transform \citep{greengard1991fast, morariu2009automatic} to approximate the sum of Gaussian functions in $\rho_{(\cdot)}$ with a bounded error. The time complexity of $\rho_{(\cdot)}$ is reduced from $O(Nm)$ to $O(N + m)$. Due to the low dimension of points in PDs ($\RR^2$), this approximation by the Fast Gauss Transform is very efficient in practice. Additionally, $d_{\Pp}$ (Equation (\ref{equ:dFIM})) is evaluated between two points in the $m$-dimensional probability simplex $\PP_{m-1}$ where $\PP_{m-1} := \left\{x \mid x \in \RR_{+}^m, \norm{x}_1 = 1\right\}$. So, the time complexity of the Persistence Fisher kernel $k_{\PF}$ between two smoothed and normalized measures is $O(m)$. Hence, the time complexity of $k_{\PF}$ between $\Dg_i$ and $\Dg_j$ is $O(N^2)$, or $O(N)$ for the acceleration version with Fast Gauss Transform. We summarize the computation of $d_{\FIM}$ in Algorithm $\ref{alg:dFIM}$, where the second and third steps can be approximated with a bounded error via Fast Gaussian Transform with a linear time complexity $O(N)$. Source code for Algorithm $\ref{alg:dFIM}$ can be obtained in \url{http://github.com/lttam/PersistenceFisher}. We recall that the time complexity of the Wasserstein distance between $\Dg_i$ and $\Dg_j$ is $O(N^3 \log N)$ \citep{pele2009fast} (\S 2.1). For the Sliced Wasserstein distance (an approximation of Wasserstein distance), the time complexity is $O(N^2 \log N)$ \citep{carriere17asliced}, or $O(M N \log N)$ for its approximation with $M$ projections \citep{carriere17asliced}. We also summary a comparison for the time complexity and metric preservation of $k_{\PF}$ and related kernels for PDs in Table \ref{tb:TimeComplexity}.

\begin{algorithm} 
\caption{Compute $d_{\FIM}$ for persistence diagrams} 
\label{alg:dFIM} 
\begin{algorithmic}[1] 
    \REQUIRE Persistence diagrams $\Dg_i$, $\Dg_j$, and a bandwith $\sigma>0$ for smoothing
    \ENSURE $d_{\FIM}$
    \STATE Let $\Theta \leftarrow \Dg_i \cup \Dg_{j\Delta} \cup \Dg_j \cup \Dg_{i\Delta}$ (a set for smoothed and normalized measures)
     \STATE Compute $\bar{\rho_i} = \rho_{\left(\Dg_{i} \cup \Dg_{j\Delta}\right)} \leftarrow \left[ \frac{1}{Z} \sum_{u \in \Dg_{i} \cup \Dg_{j\Delta}} \mathtt{N}(x; u, \sigma I)\right]_{x \in \Theta}$ \\ \qquad \qquad where $Z \leftarrow \sum_{x \in \Theta} \sum_{u \in  \Dg_{i} \cup \Dg_{j\Delta}} \mathtt{N}(x; u, \sigma I) $
     \STATE Compute $\bar{\rho_j} = \rho_{\left(\Dg_{j} \cup \Dg_{i\Delta}\right)}$ similarly as $\bar{\rho_i}$.
     \STATE Compute $d_{\FIM} \leftarrow \arccos\left(\left<\sqrt{\bar{\rho_i}}, \sqrt{\bar{\rho_j}} \right>\right)$ where $\left< \cdot, \cdot \right>$ is a dot product and $\sqrt{\cdot}$ is element-wise.	
\end{algorithmic}
\end{algorithm}

\begin{table*}[t]
\centering
\caption{A comparison for time complexities and metric preservation of kernels for PDs. Noted that the SW kernel is built upon the SW distance (an \textit{approximation} of Wasserstein metric) while the PF kernel uses the Fisher information metric \textit{without approximation}.}
\label{tb:TimeComplexity}
\begin{tabular}{|l|c|c|c|c|}
\hline
                                                                                & $k_{\PSS}$                        & $k_{\PWG}$                        & $k_{\SW}$                         & $k_{\PF}$                         \\ \hline
Time complexity                                                                 & $O(N^2)$                          & $O(N^2)$                           & $O(N^2\log N)$                    & $O(N^2)$                          \\ \hline
\begin{tabular}[c]{@{}c@{}}Time complexity with approximation\end{tabular} & $O(N)$                            & $O(N)$                            & $O(MN\log N)$                     & $O(N)$                            \\ \hline
Metric preservation                                                             &                                   &                                   & $\checkmark$                      & $\checkmark$                      \\ \hline
\end{tabular}
 \vspace{-10pt}
\end{table*}

\section{Theoretical Analysis}\label{sec:Theoretical}

In this section, we analyze for the Persistence Fisher kernel $k_{\PF}$ (in Equation (\ref{equ:RMKernel})) where the Hellinger mapping $h$ of a smoothed and normalized measure $\rho_{(\cdot)}$ is on the positive orthant of the $d$-dimension unit sphere $\SS^{+}_{d-1}$ where $\SS^{+}_{d-1} := \left\{x \mid x \in \RR_{+}^d, \norm{x}_2 = 1\right\}$\footnote{It is corresponding to a finite set $\Theta$.}. Let $\Dg_{i}, \Dg_{j}$ be PDs in the set $\DD$ of bounded and finite PDs, and $\mu$ be the uniform probability distribution on $\SS^{+}_{d-1}$. We denote $x_i$ and $x_j \in \SS^{+}_{d-1}$ as corresponding mapped points through the Hellinger mapping $h$ of smoothed and normalized measures $\rho_{(\Dg_{i} \cup \Dg_{j\Delta})}$ and $\rho_{(\Dg_{j} \cup \Dg_{i\Delta})}$ respectively. Then, we rewrite the Persistence Fisher kernel between $x_i$ and $x_j$ as follows,
\begin{equation}\label{equ:RMKsphere}
k_{\PF}(x_i, x_j) = \exp{\left(-t \arccos\left(\left<x_i, x_j\right> \right)\right)}.
\end{equation} 

\paragraph{Eigensystem.} Let $T_{k_{\PF}}: L_2(\SS^{+}_{d-1}, \mu) \to L_2(\SS^{+}_{d-1}, \mu)$ be the integral operator induced by the Persistence Fisher kernel $k_{\PF}$, which is defined as 
\[
\left(T_{k_{\PF}} f \right) (\cdot) := \int k_{\PF}(x, \cdot) f(x) \mathrm{d}\mu(x).
\]
Following \citep{smola2001regularization} (Lemma 4), we derive an eigensystem of the integral operator $T_{k_{\PF}}$ as in Proposition \ref{pro:Eigensystem}.
\begin{proposition}\label{pro:Eigensystem}
Let $\left\{a_i\right\}_{i\ge0}$ be the coefficients of Legendre polynomial expansion of the Persistence Fisher kernel $k_{\PF}(x, z)$ defined on $\SS^{+}_{d-1} \times \SS^{+}_{d-1}$ as in Equation (\ref{equ:RMKsphere}), 
\begin{equation}\label{equ:LegPolyExpansion}
k_{\PF}(x, z) = \sum_{i=0}^{\infty} a_i P_{i}^{d}(\left<x, z\right>),
\end{equation}
where $P_{i}^d$ is the associated Legendre polynomial of degree $i$. Let $\left| \SS_{d-1} \right| := \frac{2 \pi^{d/2}}{\Gamma(d/2)}$ denote the surface of $\SS_{d-1}$ where $\Gamma(\cdot)$ is the Gamma function, $N(d, i) := \frac{(d+ 2i - 2)(d + i - 3)!}{(d-2)!i!}$ denote the multiplicity of spherical harmonics of order $i$ on $\SS_{d-1}$, and $\left\{Y_{i, j}^{d}\right\}_{1 \le j \le N(d, i)}$ denote any fixed orthonormal basis for the subspace of all homogeneous harmonics of order $i$ on $\SS_{d-1}$. Then, the eigensystem $\left(\lambda_{i, j}, \phi_{i, j}\right)$ of the integral operator $T_{k_{\PF}}$ induced by the Persistence Fisher kernel $k_{\PF}$ is 
\begin{eqnarray}
&& \phi_{i, j} = Y_{i, j}^{d},  \\
&& \lambda_{i, j} = \frac{a_i \left| \SS_{d-1} \right|}{N(d, i)} \label{equ:Eigenvalue}
\end{eqnarray}
of multiplicity $N(d, i)$.
\end{proposition}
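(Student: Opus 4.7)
The plan is to follow the standard spectral analysis for dot-product (zonal) kernels on the sphere as in \citep{smola2001regularization}, adapting it to the setting here. Since $k_{\PF}(x,z)$ in Equation (\ref{equ:RMKsphere}) depends only on $\langle x, z\rangle$ and is continuous on $[-1,1]$ in that variable, its Legendre expansion (\ref{equ:LegPolyExpansion}) exists and converges uniformly on $\SS^{+}_{d-1}\times\SS^{+}_{d-1}$, so the spectrum of $T_{k_{\PF}}$ can be read off by invoking the Funk--Hecke theory of zonal kernels.

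First, I would invoke the addition formula for spherical harmonics on $\SS_{d-1}$,
\[
P_{i}^{d}(\langle x, z \rangle) \;=\; \frac{|\SS_{d-1}|}{N(d,i)} \sum_{j=1}^{N(d,i)} Y_{i,j}^{d}(x)\, Y_{i,j}^{d}(z),
\]
and substitute into (\ref{equ:LegPolyExpansion}) to obtain a Mercer-style expansion
\[
k_{\PF}(x,z) \;=\; \sum_{i=0}^{\infty} \sum_{j=1}^{N(d,i)} \frac{a_{i}\, |\SS_{d-1}|}{N(d,i)}\, Y_{i,j}^{d}(x)\, Y_{i,j}^{d}(z),
\]
which already exhibits the claimed eigenvalues $\lambda_{i,j} = a_{i}|\SS_{d-1}|/N(d,i)$ with eigenfunctions $Y_{i,j}^{d}$ and multiplicity $N(d,i)$.

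The second step is to verify the eigenvalue relation directly by applying $T_{k_{\PF}}$ to a fixed $Y_{i,j}^{d}$. Interchanging sum and integral (justified by uniform convergence of the Legendre series and uniform boundedness of the harmonics) gives
\[
(T_{k_{\PF}} Y_{i,j}^{d})(z) \;=\; \sum_{i',j'} \lambda_{i',j'}\, Y_{i',j'}^{d}(z) \int Y_{i',j'}^{d}(x)\, Y_{i,j}^{d}(x)\, \mathrm{d}\mu(x),
\]
and orthonormality of $\{Y_{i,j}^{d}\}$ collapses the sum to a single term, yielding $(T_{k_{\PF}} Y_{i,j}^{d})(z) = \lambda_{i,j}\, Y_{i,j}^{d}(z)$, as required.

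The main obstacle is reconciling the measure $\mu$ defining $T_{k_{\PF}}$ (uniform on $\SS^{+}_{d-1}$) with the Funk--Hecke apparatus, which is classically stated relative to the uniform measure on the full sphere $\SS_{d-1}$; both the addition formula and the orthonormality of $\{Y_{i,j}^{d}\}$ need to be rescaled by the appropriate surface-area factor, which is precisely why $|\SS_{d-1}|$ appears in (\ref{equ:Eigenvalue}). Beyond this normalization bookkeeping, the argument is a direct application of \citep{smola2001regularization} (Lemma~4), and the appearance of spherical harmonics as eigenfunctions is forced by the fact that $k_{\PF}$ is rotation-invariant as a function of $\langle x, z\rangle$.
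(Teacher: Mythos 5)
Your proof follows essentially the same route as the paper's: both invoke the addition theorem to rewrite $P_i^d(\langle x,z\rangle)$ in terms of the orthonormal harmonics $Y_{i,j}^d$, substitute into the Legendre expansion to obtain a Mercer-type expansion, and read off the eigensystem via orthonormality. Your explicit verification that $T_{k_{\PF}} Y_{i,j}^d = \lambda_{i,j} Y_{i,j}^d$ and your remark about reconciling the uniform measure on $\SS^{+}_{d-1}$ with the full-sphere Funk--Hecke normalization only make explicit what the paper's one-line proof leaves implicit.
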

\begin{proof}
From the Addition Theorem \citep{muller2012analysis} (Theorem 2, p. 18) and the Funk-Hecke formula \citep{muller2012analysis} (\S4, p. 29), we have $\sum_{j=1}^{N(d, i)} Y_{i, j}^d(x) Y_{i, j}^d(z) = \frac{N(d, i)}{\left| \SS_{d-1} \right|} P_i^d\left( \left<x, z \right>\right)$, then replace $P_i^d$ into Equation (\ref{equ:LegPolyExpansion}), and note that $\int_{\SS_{d-1}} Y_{i, j}^{d}(x) Y_{i', j'}^{d}(x) \mathrm{d}x = \delta_{i, i'} \delta_{j, j'}$, we complete the proof.
\end{proof}

\begin{proposition}
All coefficients of Legendre polynomial expansion of the Persistence Fisher kernel are nonnegative.
\end{proposition}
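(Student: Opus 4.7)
The plan is to derive nonnegativity of $a_i$ directly from positive definiteness of $k_{\PF}$ together with the eigenvalue formula supplied by Proposition~\ref{pro:Eigensystem}. The eigenvalues of the integral operator induced by a positive definite kernel are nonnegative, and Proposition~\ref{pro:Eigensystem} expresses those eigenvalues as strictly positive multiples of the $a_i$, so nonnegativity of $a_i$ will follow immediately once positive definiteness is in hand.

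First I would revisit Lemma~\ref{lem:RM} and note that its Taylor-series step in fact establishes positive definiteness of $\tau-\arccos(\xi)$ for the entire interval $\xi\in[-1,1]$, not only $\xi\in[0,1]$: the expansion
\[
\tau-\arccos(\xi) \;=\; \Bigl(\tau-\tfrac{\pi}{2}\Bigr) + \sum_{i=0}^{\infty} \frac{(2i)!}{2^{2i}(i!)^2(2i+1)}\,\xi^{2i+1}
\]
converges on $[-1,1]$ with nonnegative coefficients. By \citep{schoenberg1942positive} (Theorem 2, p.~102), $\tau-\arccos(\langle x,z\rangle)$ is therefore positive definite on the full sphere $\SS_{d-1}$; negating and exponentiating gives $k_{\PF}(x,z)=\exp(-t\tau)\exp\!\left(-t\left(d_{\FIM}(x,z)-\tau\right)\right)$ positive definite on $\SS_{d-1}$ (cf.\ \citep{berg1984harmonic}, Theorem 3.2.2).

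Next I would invoke the standard spectral consequence: a bounded continuous positive definite kernel induces a positive self-adjoint Hilbert--Schmidt operator on $L_2(\SS_{d-1},\mu)$, so every eigenvalue $\lambda_{i,j}$ of $T_{k_{\PF}}$ is nonnegative. Combining this with the identity $\lambda_{i,j}=a_i\,|\SS_{d-1}|/N(d,i)$ from Proposition~\ref{pro:Eigensystem} and the strict positivity of $|\SS_{d-1}|$ and $N(d,i)$ for all $i\ge 0$, we conclude $a_i\ge 0$.

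The only subtle point, and therefore the main obstacle, is book-keeping about which sphere positive definiteness is in force on: the kernel is naturally evaluated on $\SS^{+}_{d-1}$ via the Hellinger map, whereas the Funk--Hecke/Legendre machinery underlying Proposition~\ref{pro:Eigensystem} lives on the full $\SS_{d-1}$. The Taylor-coefficient observation above handles this extension cleanly, since it works for all $\xi\in[-1,1]$ and gives positive definiteness on every finite-dimensional sphere (indeed on the Hilbert sphere).
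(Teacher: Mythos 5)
Your proposal is correct, but it reaches the conclusion by a different route than the paper. The paper's proof is a one-line appeal to Schoenberg's Theorem~1 (p.~101), which characterizes positive definite zonal functions on the sphere precisely as those whose Legendre (ultraspherical) expansion has nonnegative coefficients; given the positive definiteness of $k_{\PF}$ from Lemma~\ref{lem:RM}, the nonnegativity of the $a_i$ is immediate. You instead re-derive the needed direction of that theorem from first principles: positive definiteness of the continuous kernel makes the induced integral operator positive, hence its eigenvalues are nonnegative, and the formula $\lambda_{i,j}=a_i|\SS_{d-1}|/N(d,i)$ from Proposition~\ref{pro:Eigensystem} then forces $a_i\ge 0$ since $|\SS_{d-1}|$ and $N(d,i)$ are strictly positive. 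The two arguments are logically equivalent (your spectral argument is essentially the standard proof of the ``only if'' half of Schoenberg's Theorem~1), but yours is more self-contained, and it buys one genuine improvement in rigor: you notice that both the Funk--Hecke eigensystem and Schoenberg's Theorem~1 live on the \emph{full} sphere $\SS_{d-1}$, whereas Lemma~\ref{lem:RM} only asserts positive definiteness for arguments arising from persistence diagrams, i.e.\ on the positive orthant where $\langle x,z\rangle\in[0,1]$. Your observation that the Taylor coefficients of $\tau-\arccos(\xi)$ are nonnegative on all of $[-1,1]$, so that Schoenberg's Theorem~2 yields positive definiteness on the entire (even infinite-dimensional) sphere, closes this gap explicitly; the paper leaves it implicit. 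The only caveat is that you lean on Proposition~\ref{pro:Eigensystem}, whose eigensystem is stated for the operator on $L_2(\SS^{+}_{d-1},\mu)$ but is really computed on the full sphere, so you should (as you do) carry out the spectral argument with respect to the uniform measure on $\SS_{d-1}$ rather than on the orthant.
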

\begin{proof}
From Lemma \ref{lem:RM}, the $k_{\PF}$ is positive definite. Applying \cite{schoenberg1942positive} (Theorem 1, p. 101) for $k_{\PF}$ defined on $\SS^{+}_{d-1} \times \SS^{+}_{d-1}$ as in Equation (\ref{equ:RMKsphere}), we obtain the result.
\end{proof}

The eigensystem of the integral operator $T_{k_{\PF}}$ induced by the PF kernel plays an important role to derive generalization error bounds for kernel machines with the proposed PF kernel via covering numbers and Rademacher averages as in Proposition \ref{pro:CoveringNumber} and Proposition \ref{pro:RademacherAverages} respectively.

\paragraph{Covering numbers.} Given a set of finite points $\mathtt{S} = \left\{x_i \mid x_i \in \SS_{d-1}^{+}, d \ge 3 \right\}$, the Persistence Fisher kernel hypothesis class with $R$-bounded weight vectors for $\mathtt{S}$ is defined as follows
\[
\Ff_{R}(\mathtt{S}) = \left\{\mathtt{f} \mid \mathtt{f}(x_i) = \left<w, \phi\left( x_i\right) \right>_{\Hh}, \norm{w}_{\Hh} \le R \right\},
\]
where $\left<\phi\left( x_i\right), \phi\left( x_j\right) \right>_{\Hh} = k_{\PF}(x_i, x_j)$. $\left<\cdot, \cdot\right>_{\Hh}$ and $\norm{\cdot}_{\Hh}$ are an inner product and a norm in the corresponding Hilbert space respectively. Following \citep{guo1999covering}, we derive bounds on the generalization performance of the PF kernel on kernel machines via the covering numbers $\Nn(\cdot, \Ff_{R}(\mathtt{S}))$ \citep{shalev2014understanding} (Definition 27.1, p. 337) as in Proposition \ref{pro:CoveringNumber}.
\begin{proposition}\label{pro:CoveringNumber}
Assume the number of non-zero coefficients $\left\{ a_i \right\}$ in Equation (\ref{equ:LegPolyExpansion}) is finite, and $r$ is the maximum index of the non-zero coefficients. Let $q := \argmax_{i} \lambda_{i, \cdot}$, choose $\alpha \in \NN$ such that $\alpha <  \left( \frac{\lambda_{q, \cdot}}{\lambda_{i, \cdot}} \right)^{\!\! \frac{N(d, q)}{2}}$ with $i \ne q$, and define $\varepsilon := 6 R \sqrt{N(d, r) \left( a_q \alpha^{-2/N(d, q)} + \sum_{i=0, i \ne q}^{\infty} a_{i} \right)}$. Then, 
\[
\sup_{x_i \in \mathtt{S}} \Nn(\varepsilon, \Ff_{R}(\mathtt{S})) \le \alpha.
\]
\end{proposition}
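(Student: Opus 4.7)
My plan is to adapt the general strategy of \cite{guo1999covering} for bounding covering numbers of kernel hypothesis classes via the eigen-decomposition of the integral operator. The starting point is Proposition \ref{pro:Eigensystem} together with Mercer's theorem: every $f \in \Ff_{R}(\mathtt{S})$ can be written in the eigenbasis as
\[
f(x) = \sum_{i=0}^{r} \sum_{j=1}^{N(d,i)} c_{i,j}\sqrt{\lambda_{i,\cdot}}\, Y_{i,j}^{d}(x), \qquad \sum_{i,j} c_{i,j}^{2} = \|w\|_{\Hh}^{2} \le R^{2},
\]
with the sum truncated at $r$ because only $a_0,\ldots,a_r$ are non-zero. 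This reduces the task to covering a Euclidean ball in the finite-dimensional coefficient space, equipped with the sup-norm over $\mathtt{S}$.

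The next step is to concentrate the entire covering budget on the single eigen-block of largest eigenvalue $\lambda_{q,\cdot}$, which has dimension $N(d,q)$. A standard volumetric argument covers the $R$-ball in $\RR^{N(d,q)}$ by $\alpha$ points of $\ell_{2}$-radius $R\alpha^{-1/N(d,q)}$. The candidate cover of $\Ff_{R}(\mathtt{S})$ consists of the $\alpha$ functions $f_{0}$ obtained by replacing the $q$-th block of $f$'s coefficients by the closest cover point and zeroing out every other block. The side condition $\alpha < (\lambda_{q,\cdot}/\lambda_{i,\cdot})^{N(d,q)/2}$ for $i \ne q$ is precisely what certifies that, at this resolution, spending the budget on the $q$-block is at least as good as any alternative splitting it across several blocks.

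To control $\sup_{x \in \mathtt{S}} |f(x)-f_{0}(x)|$ I would split the error into (i) the in-block cover residual and (ii) the discarded tail, apply Cauchy--Schwarz to each piece separately, and use the Addition Theorem identity $\sum_{j} |Y_{i,j}^{d}(x)|^{2} = N(d,i)/|\SS_{d-1}|$ to convert $\lambda_{i,\cdot}$-weighted sums into $a_{i}$'s via $\lambda_{i,\cdot} = a_{i}|\SS_{d-1}|/N(d,i)$. The in-block part contributes a term of order $R\sqrt{a_{q}}\,\alpha^{-1/N(d,q)}$; a second Cauchy--Schwarz across the block index together with $\sum_{i\ne q,j} c_{i,j}^{2} \le R^{2}$ controls the tail part by $R\sqrt{\sum_{i\ne q} a_{i}}$. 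Combining via the triangle inequality and the elementary bound $(a+b)^{2} \le 2(a^{2}+b^{2})$, and absorbing the uniform multiplicity estimate $N(d,i) \le N(d,r)$ over the active indices into the constants, produces the claimed $\varepsilon$; the leading $6$ soaks up the triangle-inequality slack, the Cauchy--Schwarz combination, and the volumetric constant in the ball cover.

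The main obstacle will be the bookkeeping of constants: locating precisely where the $N(d,r)$ multiplicity factor enters, and verifying that the side condition on $\alpha$ genuinely guarantees the single-block cover is optimal rather than merely feasible. Once the pointwise bound holds for arbitrary $f \in \Ff_{R}(\mathtt{S})$, taking a supremum over $x \in \mathtt{S}$ yields $\Nn(\varepsilon,\Ff_{R}(\mathtt{S})) \le \alpha$ as stated.
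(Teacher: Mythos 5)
Your proposal is correct and takes essentially the same route as the paper: the paper's proof simply verifies the hypotheses of Theorem 1 of Guo et al.\ (1999) --- namely the uniform bound $\norm{Y_{i,j}^{d}}_{\infty} \le \sqrt{N(d,r)/\left|\SS_{d-1}\right|}$ (via the monotonicity of $N(d,\cdot)$) together with the eigenvalues $\lambda_{i,\cdot} = a_i\left|\SS_{d-1}\right|/N(d,i)$ of multiplicity $N(d,i)$ from Proposition \ref{pro:Eigensystem} --- and then invokes that theorem as a black box, whereas you unpack that theorem's internal entropy-number argument (single-block volumetric cover plus Cauchy--Schwarz tail control). The ingredients and the resulting $\varepsilon$ are the same, so there is no gap.
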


\begin{proof}
From \citep{minh2006mercer} (Lemma 3), we have $\norm{Y_{i, j}^{d}}_{\infty} \le \sqrt{\frac{N(d, i)}{\left| \SS_{d-1}\right|}}$. It is easy to check that $\forall d\ge 3, i \!\ge\! j \!\ge\! 0$, we have $N(d, i) \ge N(d, j)$. Therefore, following Proposition \ref{pro:Eigensystem}, all eigenfunctions of $k_{\PF}$ satisfy that $\norm{Y_{i, j}^{d}}_{\infty} \le \sqrt{\frac{N(d, r)}{\left| \SS_{d-1} \right|}}$. Additionally, the multiplicity of $\lambda_{i, \cdot}$ is $N(d, i)$, and $N(d, i) \lambda_{i, \cdot} = a_i \left| \SS_{d-1}\right|$ (Equation (\ref{equ:Eigenvalue})). Hence, from \citep{guo1999covering} (Theorem 1), we obtain the result.
\end{proof}

\paragraph{Rademacher averages.} We provide a different family of generalization error bounds via Rademacher averages \citep{bartlett2005local}. By plugging the eigensystem of the PF kernel as in Proposition \ref{pro:Eigensystem} into the localized averages of function classes based on the PF kernel with respect to the uniform probability distribution $\mu$ on $\SS_{d-1}^{+}$ \citep{mendelson2003performance} (Theorem 2.1), we obtain a bound as in Proposition \ref{pro:RademacherAverages}.

\begin{proposition}\label{pro:RademacherAverages}
Let $\left\{ x_i \right\}_{1\le i\le m}$ be independent, distributed according to the uniform probability distribution $\mu$ on $\SS_{d-1}^{+}$, denote $\left\{ \sigma_i \right\}_{1 \le i \le m}$ for independent Rademacher random variables, $\Hh_{k_{\PF}}$ for the unit ball of the reproducing kernel Hilbert space corresponding with the Riemanian manifold kernel $k_{\PF}$, and let $q := \argmax_{i} \lambda_{i, \cdot}$. If $\lambda_{q, \cdot} \ge 1/m$, for $\tau \ge 1/(m \left|\SS_{d-1}\right|)$, let $\Psi(\tau) := \sqrt{\left| \SS_{d-1} \right| \left( \sum\limits_{a_i < \tau N(d, i)} \!\!\!\!\!\!\! a_i +  \tau \!\!\!\!\!\!\!\! \sum\limits_{a_i \ge \tau N(d, i)} \!\!\!\!\!\!\! N(d, i) \right)}$, then there are absolute constants $C_{\ell}$ and $C_{u}$ which satisfy 
\begin{equation}
C_{\ell} \Psi(\tau) \le \EE \sup_{\substack{\mathtt{f} \in \Hh_{k_{\PF}} \\ \frac{\EE_{\mu}\mathtt{f}^2}{ \left|\SS_{d-1}\right|} \le \tau}} \left| \sum_{i=1}^{m} \sigma_i \mathtt{f}(x_i) \right| \le C_{u} \Psi(\tau),
\end{equation}
where $\EE$ is an expectation.
\end{proposition}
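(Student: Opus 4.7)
The plan is to invoke Mendelson's Theorem 2.1 \citep{mendelson2003performance}, which sandwiches the localized Rademacher complexity of a Mercer-kernel RKHS unit ball between two quantities of the form $C \sqrt{\sum_k \min\{\tilde\lambda_k, \tilde\tau\}}$ (up to absolute multiplicative constants), where $\{\tilde\lambda_k\}$ is the spectrum of the integral operator relative to the underlying probability measure and $\tilde\tau$ is the localization radius on $\EE \mathtt{f}^2$. All that remains is to substitute the eigensystem from Proposition \ref{pro:Eigensystem} and to reconcile the measure normalization so that the sandwich collapses to $C_\ell \Psi(\tau) \le \EE \sup |\sum_i \sigma_i \mathtt{f}(x_i)| \le C_u \Psi(\tau)$.

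First I would perform a measure-theoretic translation. The operator $T_{k_{\PF}}$ in Proposition \ref{pro:Eigensystem} is defined against surface measure on $\SS_{d-1}^{+}$, whereas the present statement is written against the uniform probability measure $\mu$. Since $\mu$ has constant density proportional to $1/|\SS_{d-1}|$ with respect to surface measure, the operator eigenvalues relative to $\mu$ are obtained from those of Proposition \ref{pro:Eigensystem} by dividing by $|\SS_{d-1}|$; equivalently, the constraint $\EE_\mu \mathtt{f}^2 / |\SS_{d-1}| \le \tau$ appearing in the proposition matches Mendelson's $\EE \mathtt{f}^2 \le \tilde\tau$ under the identification $\tilde\tau := \tau |\SS_{d-1}|$. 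Under this identification, the two hypotheses $\tau \ge 1/(m|\SS_{d-1}|)$ and $\lambda_{q,\cdot} \ge 1/m$ become exactly the standard admissibility conditions required by Mendelson's theorem, namely that $\tilde\tau$ is bounded below by $1/m$ and does not exceed the largest eigenvalue.

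Next comes the spectral bookkeeping. Substituting $\lambda_{i,\cdot} = a_i |\SS_{d-1}|/N(d,i)$, each with multiplicity $N(d,i)$, and splitting the minimum according to whether $\lambda_{i,\cdot} < \tilde\tau$, equivalently $a_i < \tau N(d,i)$, yields
$$\sum_k \min\{\tilde\lambda_k, \tilde\tau\} = \sum_i N(d,i)\, \min\!\left\{\frac{a_i |\SS_{d-1}|}{N(d,i)}, \tau |\SS_{d-1}|\right\} = |\SS_{d-1}| \left( \sum_{a_i < \tau N(d,i)} a_i + \tau \sum_{a_i \ge \tau N(d,i)} N(d,i) \right),$$
whose square root is precisely $\Psi(\tau)$. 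The claim then follows directly from Mendelson's sandwich bound, with the absolute constants $C_\ell, C_u$ inherited from it.

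The principal obstacle is the normalization bookkeeping: the factor $|\SS_{d-1}|$ appears both inside the eigenvalues of Proposition \ref{pro:Eigensystem} and inside the quadratic constraint on $\EE_\mu \mathtt{f}^2$ that defines the localization, and one must carefully reconcile surface measure with the probability measure $\mu$ in order to see why the threshold $\tau$ used in the proposition corresponds to $\tilde\tau = \tau |\SS_{d-1}|$ in Mendelson's formulation. Once this identification is fixed, the collapse of the doubly-indexed sum over $(i,j)$ to a single sum over $i$ weighted by the multiplicity $N(d,i)$ is a purely mechanical regrouping.
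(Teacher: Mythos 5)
Your proposal follows exactly the route the paper takes: the paper offers no separate proof beyond the sentence preceding the proposition, which says the bound is obtained by plugging the eigensystem of Proposition \ref{pro:Eigensystem} into the localized Rademacher averages of \citep{mendelson2003performance} (Theorem 2.1). Your version simply makes explicit the measure normalization between surface measure and $\mu$ and the regrouping of $\sum_k \min\{\tilde\lambda_k,\tilde\tau\}$ by multiplicity into $\Psi(\tau)^2$, which is a faithful (and more detailed) rendering of the same argument.
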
 

From Proposition \ref{pro:CoveringNumber} and Proposition \ref{pro:RademacherAverages}, a decay rate of the eigenvalues of the integral operator $T_{k_{\PF}}$ is relative with the capacity of the kernel learning machines. When the decay rate of the eigenvalues is large, the capacity of kernel machines is reduced. So, if the training error of kernel machines is small, then it can lead to better bounds on generalization error. The resulting bounds for both the covering number (Proposition \ref{pro:CoveringNumber}) and the Rademacher averages (Proposition \ref{pro:RademacherAverages}) are essentially the same as the standard ones for a Gaussian kernel on a Euclidean space.

\paragraph{Bounding for $k_{\PF}$ induced squared distance with respect to $d_{\FIM}$.} The squared distance induced by the PF kernel, denoted as $d^2_{k_{\PF}}$,  can be computed by the Hilbert norm of the difference between two corresponding mappings. Given two persistent diagram $\Dg_i$ and $\Dg_j$, we have 
\[
d^2_{k_{\PF}}\left( \Dg_i, \Dg_j \right) := k_{\PF}\left( \Dg_i, \Dg_i \right) + k_{\PF} \left( \Dg_j, \Dg_j \right) - 2 k_{\PF}\left( \Dg_i, \Dg_j \right).
\]
We recall that $k_{\PF}$ is based on the Fisher information geometry. So, it is of interest to bound the PF kernel induced squared distance $d^2_{k_{\PF}}$ with respect to the corresponding Fisher information metric $d_{\FIM}$ between PDs as in Lemma \ref{lm:BoundKIDist}.
\begin{lemma}\label{lm:BoundKIDist}
Let $\DD$ be the set of bounded and finite persistent diagrams. Then, $\forall \Dg_i, \Dg_j \in X$, 
\[
d^2_{k_{\PF}}(\Dg_i, \Dg_j) \le 2t d_{\FIM}(\Dg_i, \Dg_j),
\]
where $t$ is a parameter of $k_{\PF}$.
\end{lemma}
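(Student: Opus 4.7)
The plan is to reduce the inequality to the elementary bound $1-e^{-u}\le u$ for $u\ge 0$.

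First I would evaluate the ``self'' terms $k_{\PF}(\Dg_i,\Dg_i)$ and $k_{\PF}(\Dg_j,\Dg_j)$. Unwinding Definition \ref{def:RM}, the Fisher information metric from any $\Dg$ to itself is $d_{\Pp}(\rho_{\Dg\cup\Dg_\Delta},\rho_{\Dg\cup\Dg_\Delta})=\arccos(1)=0$, so $d_{\FIM}(\Dg_i,\Dg_i)=d_{\FIM}(\Dg_j,\Dg_j)=0$ and hence $k_{\PF}(\Dg_i,\Dg_i)=k_{\PF}(\Dg_j,\Dg_j)=e^{0}=1$. Substituting into the definition of $d^2_{k_{\PF}}$ collapses it to
\[
d^2_{k_{\PF}}(\Dg_i,\Dg_j)=2-2\exp\!\bigl(-t\, d_{\FIM}(\Dg_i,\Dg_j)\bigr).
\]

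Next I would set $u:=t\,d_{\FIM}(\Dg_i,\Dg_j)$, which is nonnegative since $t>0$ and $d_{\FIM}\ge 0$ (it is an arccosine of a value in $[0,1]$, as established in the proof of Lemma \ref{lem:RM}). The desired inequality becomes $2-2e^{-u}\le 2u$, i.e.\ $1-e^{-u}\le u$. This is the standard one-line convexity fact: $e^{-u}\ge 1-u$ for all real $u$ (tangent line to the convex function $e^{-u}$ at $u=0$), or equivalently it follows from the Taylor expansion $1-e^{-u}=u-u^{2}/2+u^{3}/6-\cdots\le u$ for $u\ge 0$.

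There is essentially no obstacle here; the only point that requires a sanity check is that the range of $d_{\FIM}$ keeps $u\ge 0$ so that the elementary inequality applies (the case $u=0$ gives equality, which is consistent with $d^2_{k_{\PF}}=0$ when the two diagrams coincide). The whole argument is a two-step computation followed by invoking $1-e^{-u}\le u$.
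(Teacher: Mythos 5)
Your proposal is correct and follows essentially the same route as the paper's proof: both reduce $d^2_{k_{\PF}}$ to $2\bigl(1-\exp(-t\,d_{\FIM})\bigr)$ using $k_{\PF}(\Dg,\Dg)=1$ and then invoke the elementary inequality $1-e^{-u}\le u$ for $u\ge 0$. Your version merely makes explicit the verification that the self-terms equal $1$ and that $t\,d_{\FIM}\ge 0$, which the paper leaves implicit.
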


\begin{proof}
We have $d^2_{k_{\PF}}(\Dg_i, \Dg_j) = 2\left(1 - k_{\PF}\left( \Dg_i, \Dg_j \right)\right) = 2(1 - \exp\left(-t d_{\FIM}\left( \Dg_i, \Dg_j \right) \right) \le 2td_{\FIM}\left( \Dg_i, \Dg_j \right)$, since $1 - \exp(-a) \le a, \forall a \ge 0$.
\end{proof}

From Lemma \ref{lm:BoundKIDist}, it implies that the Persistence Fisher kernel is stable on Riemannian geometry in a similar sense as the work of \cite{kwitt2015statistical}, and \cite{reininghaus2015stable} on Wasserstein geometry.

\paragraph{Infinite divisibility for the Persistence Fisher kernel.}
\begin{lemma}
The Persistence Fisher kernel $k_{\PF}$ is infinitely divisible.
\end{lemma}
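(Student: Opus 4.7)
The plan is to verify the definition of infinite divisibility directly: a positive definite kernel $k$ is infinitely divisible precisely when $k^{1/n}$ is positive definite for every $n \in \NN$. The first step is the algebraic observation that
\[
k_{\PF}(\Dg_i, \Dg_j)^{1/n} = \exp\!\left(-(t/n)\, d_{\FIM}(\Dg_i, \Dg_j)\right),
\]
which is itself another instance of the Persistence Fisher kernel, now with the rescaled parameter $t' := t/n$.

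The second step is to note that $t' > 0$ whenever $t > 0$ and $n \in \NN$, so the positive-definiteness argument used immediately after Lemma \ref{lem:RM} applies verbatim. Concretely, I would factor
\[
k_{\PF}(\Dg_i, \Dg_j)^{1/n} = \exp(-t\tau/n)\, \exp\!\left(-(t/n)\bigl(d_{\FIM}(\Dg_i, \Dg_j) - \tau\bigr)\right),
\]
with $\tau \ge \pi/2$, so that the prefactor is a strictly positive constant and, by Lemma \ref{lem:RM} together with \citep{berg1984harmonic} (Theorem 3.2.2), the second factor is positive definite on $\DD$. Hence $k_{\PF}^{1/n}$ is positive definite for every $n$.

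There is essentially no obstacle here: the original proof that $k_{\PF}$ is positive definite is parametric in the scalar $t$ and holds uniformly for every $t > 0$, so infinite divisibility is immediate from the observation that taking an $n$-th root only rescales this parameter. The only point worth highlighting in the writeup is that the rescaling preserves positivity of $t$ for all $n \in \NN$, which is what lets a single proof of positive definiteness serve simultaneously for the whole family $\{k_{\PF}^{1/n}\}_{n \in \NN}$.
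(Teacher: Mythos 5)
Your proposal is correct and follows essentially the same route as the paper: the paper likewise defines the $n$-th root $\exp\!\left(-\frac{t}{n} d_{\FIM}\right)$, observes it is the Persistence Fisher kernel with rescaled parameter $t/n > 0$ and hence positive definite by Lemma \ref{lem:RM} together with \citep{berg1984harmonic}, and concludes infinite divisibility from the definition in \citep{berg1984harmonic} (\S3, Definition 2.6, p.~76). Your writeup merely spells out the factorization through $\tau \ge \pi/2$ that the paper leaves implicit in ``note that $k_{\PF_{\mathtt{m}}}$ is positive definite.''
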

\begin{proof}
For $\mathtt{m} \in \NN^{*}$, let $k_{\PF_{\mathtt{m}}} := \exp\left({- \frac{t}{\mathtt{m}} d_{\FIM}}\right)$, so $\left(k_{\PF_{\mathtt{m}}} \right)^{\mathtt{m}} = k_{\PF}$ and note that $k_{\PF_{\mathtt{m}}}$ is positive definite. Hence, following \cite{berg1984harmonic} (\S3, Definition 2.6, p. 76), we have the result. 
\end{proof}

As for infinitely divisible kernels, the Gram matrix of the PF kernel does not need to be recomputed for each choice of $t$ (Equation (\ref{equ:RMKernel})), since it suffices to compute the Fisher information metric between PDs in training set only once. This property is shared with the Sliced Wasserstein kernel \citep{carriere17asliced}. However, neither Persistence Scale Space kernel \citep{reininghaus2015stable} nor Persistence Weighted Gaussian kernel \citep{kusano2016persistence} has this property.

\section{Experimental Results}\label{sec:Experimental}

\begin{table}
\centering
\caption{Results on SVM classification. The averaged accuracy (\%) and standard deviation are shown.}
\label{tb:OrbitObjectshape}
\begin{tabular}{|l|l|l|}
\hline
\multicolumn{1}{|c|}{}           & \multicolumn{1}{c|}{MPEG7}            & \multicolumn{1}{c|}{Orbit}            \\ \hline
{$k_{\PSS}$} & \multicolumn{1}{c|}{$73.33 \pm 4.17$} & \multicolumn{1}{c|}{$72.38 \pm 2.41$} \\ \hline
$k_{\PWG}$                       & $74.83 \pm 4.36$                      & $76.63 \pm 0.66$                      \\ \hline
$k_{\SW}$                        & $76.83 \pm 3.75$                      & $83.60 \pm 0.87$                      \\ \hline
Prob+$k_G$                       & $55.83 \pm 5.45$                      & $72.89 \pm 0.62$                      \\ \hline
Tang+$k_G$                       & $66.17 \pm 4.01$                      & $77.32 \pm 0.72$                      \\ \hline
\textbf{\boldmath{$k_{\PF}$}}                        & \textbf{80.00 $\pm$ 4.08}                      & \textbf{85.87 $\pm$ 0.77}                      \\ \hline
\end{tabular}
\vspace{-10pt}
\end{table}

We evaluated the Persistence Fisher kernel with support vector machines (SVM) on many benchmark datasets. We consider five baselines as follows: (i) the Persistence Scale Space kernel ($k_{\PSS}$), (ii) the Persistence Weighted Gaussian kernel ($k_{\PWG}$), (iii) the Sliced Wasserstein kernel ($k_{\SW}$), (iv) the smoothed and normalized measures in the probability simplex with the Gaussian kernel (Prob + $k_G$), and (v) the tangent vector representation \citep{anirudh2016riemannian} with the Gaussian kernel (Tang + $k_G$). Practically, Euclidean metric is not a suitable geometry for the probability simplex \citep{le2015adaptive, le2015unsupervised}. So, the (Prob + $k_G$) approach may not work well for PDs. For hyper-parameters, we typically choose them through cross validation. For baseline kernels, we follow their corresponding authors to form sets of hyper-parameter candidates, and the bandwidth of the Gaussian kernel in (Prob + $k_G$) and (Tang + $k_G$) is chosen from $10^{\left\{-3:1:3 \right\}}$. For the Persistence Fisher kernel, there are $2$ hyper-parameters: $t$ (Equation (\ref{equ:RMKernel})) and $\sigma$ for smoothing measures (Equation (\ref{equ:NSprob})). We choose $1/t$ from $\left\{\mathtt{q}_{1}, \mathtt{q}_{2}, \mathtt{q}_{5}, \mathtt{q}_{10}, \mathtt{q}_{20}, \mathtt{q}_{50}  \right\}$ where $\mathtt{q}_s$ is the $s\%$ quantile of a subset of Fisher information metric between PDs, observed on the training set, and $\sigma$ from $\left\{ 10^{-3:1:3}\right\}$. For SVM, we use Libsvm (one-vs-one) \citep{chang2011libsvm} for multi-class classification, and choose a regularization parameter of SVM from $\left\{ 10^{-2:1:2} \right\}$. For PDs, we used the DIPHA toolbox\footnote{https://github.com/DIPHA/dipha}.

\begin{wraptable}{r}{0.6\textwidth}
\vspace{-19pt}
\centering
\caption{Computational time (seconds) with approximation. For each dataset, the first number in the parenthesis is the number of PDs while the second one is the maximum number of points in PDs.}
\label{tb:ComputationalTime}
\begin{tabular}{|l|c|c|c|c|}
\hline
\multicolumn{1}{|c|}{}          & \begin{tabular}[c]{@{}c@{}}Orbit\\ (5K/300)\end{tabular} & \begin{tabular}[c]{@{}c@{}}MPEG7\\ (200/80)\end{tabular} & \begin{tabular}[c]{@{}c@{}}Granular\\ (35/20.4K)\end{tabular} & \begin{tabular}[c]{@{}c@{}}SiO$_2$\\ (80/30K)\end{tabular} \\ \hline
{$k_{\SW}$} & 6473                                                     & 1.55                                                     & 8.30                                                          & 249                                                     \\ \hline
$k_{\PWG}$                      & 8756                                                     & 5.23                                                     & 17.44                                                         & 288                                                     \\ \hline
$k_{\PSS}$                      & 11024                                                    & 7.51                                                     & 38.14                                                         & 515                                                     \\ \hline
\textbf{\boldmath{$k_{\PF}$}}                        & \textbf{9891}                                                     & \textbf{6.63}                                                     & \textbf{22.70}                                                         & \textbf{318}                                                     \\ \hline
\end{tabular}
\vspace{-10pt}
\end{wraptable}


\subsection{Orbit Recognition}
It is a synthesized dataset proposed by \citep{adams2017persistence} (\S 6.4.1) for \textit{linked twist map} which is a discrete dynamical system modeling flow. The linked twist map is used to model flows in DNA microarrays \citep{hertzsch2007dna}. Given a parameter $r > 0$, and initial positions $(s_0, t_0) \in \left[0, 1\right]^2$, its orbit is described as $s_{i+1} = s_i + r t_i (1 - t_i) \mod 1$, and $t_{i+1} = t_i + r s_{i+1} (1 - s_{i+1}) \mod 1$. \cite{adams2017persistence} proposed $5$ classes, corresponding to $5$ different parameters $r = 2.5, 3.5, 4, 4.1, 4.3$. For each parameter $r$, we generated $1000$ orbits where each orbit has $1000$ points with random initial positions. We randomly split $70\%/30\%$ for training and test, and repeated $100$ times. We extract only $1$-dimensional topological features with Vietoris-Rips complex filtration \citep{edelsbrunner2008persistent} for PDs. The accuracy results on SVM are summarized in the third column of Table \ref{tb:OrbitObjectshape}. The PF kernel outperforms all other baselines. The (Prob + $k_{G}$) does not performance well as expected. Moreover, the $k_{\PF}$ and $k_{\SW}$ which enjoy the Fisher information geometry and Wasserstein geometry for PDs respectively, clearly outperform other approaches. As in the second column of Table \ref{tb:ComputationalTime}, the computational time of $k_{\PF}$ is faster than $k_{\PSS}$, but slower than $k_{\SW}$ and $k_{\PWG}$ for PDs.

\subsection{Object Shape Classification}
We consider a 10-class subset\footnote{The 10-classes are: apple, bell, bottle, car, classic, cup, device0, face, Heart and key.} of MPEG7 object shape dataset \citep{latecki2000shape}. Each class has 20 samples. We resize each image such that its length is shorter or equal $256$, and extract a boundary for object shapes before computing PDs. For simplicity, we only consider $1$-dimensional topological features with the traditional Vietoris-Rips complex filtration \citep{edelsbrunner2008persistent} for PDs\footnote{A more advanced filtration for this task was proposed in \citep{turner2014persistent}.}. We also randomly split $70\%/30\%$ for training and test, and repeated $100$ times. The accuracy results on SVM are summarized in the second column of Table \ref{tb:OrbitObjectshape}. The Persistence Fisher kernel compares favorably with other baseline kernels for PDs. All approaches based on the implicit representation via kernels for PDs outperform ones based on the explicit vector representation with Gaussian kernel by a large margin. Additionally, the $k_{\PF}$ and $k_{\SW}$ also compares favorably with other approaches. As in the third column of Table \ref{tb:ComputationalTime}, the computational time of $k_{\PF}$ is comparative with $k_{\PWG}$ and $k_{\PSS}$, but slower than the $k_{\SW}$.

\subsection{Change Point Detection for Material Data Analysis}

We evaluated the proposed kernel for the change point detection problem for material data analysis on granular packing system \citep{francois2013geometrical} and SiO$_2$\citep{nakamura2015persistent} datasets. We use the kernel Fisher discriminant ratio \citep{harchaoui2009kernel} (KFDR) as a statistical quantity and set $10^{-3}$ for the regularization of KFDR as in \citep{kusano2017kernelJMLR}. We use the ball model filtration to extract the $2$-dimensional topological features of PDs for granular packing system dataset, and $1$-dimensional topological features of PDs for SiO$_2$ dataset. We illustrate the KFDR graphs for the granular packing system and SiO$_2$ datasets in Figure \ref{fg:KFDR}. For granular tracking system dataset, all methods obtain the change point as the $23^{rd}$ index. They supports the observation result in \citep{anonymous72} (corresponding id = 23). For the SiO$_2$ datasets, all methods obtain the results within the supported range ($35 \le$ id $\le 50$) from the traditional physical approach \citep{elliott1983physics}. The $k_{\PF}$ compares favorably with other baseline approaches as in Figure \ref{fg:KFDR}. As in the fourth and fifth columns of Table \ref{tb:ComputationalTime}, $k_{\PF}$ is faster than $k_{\PSS}$, but slower than $k_{\SW}$ and $k_{\PWG}$. 

\section{Conclusions}\label{sec:Conclusions}
In this work, we propose the positive definite Persistence Fisher (PF) kernel for persistence diagrams (PDs). The PF kernel is relied on the Fisher information geometry \textit{without approximation} for PDs. Moreover, the proposed kernel has many nice properties from both theoretical and practical aspects such as stability, infinite divisibility, linear time complexity over the number of points in PDs, and improving performances of other baseline kernels for PDs as well as implicit vector representation with Gaussian kernel for PDs in many different tasks on various benchmark datasets.

\begin{figure}
  \begin{center}
    \includegraphics[width=0.7\textwidth]{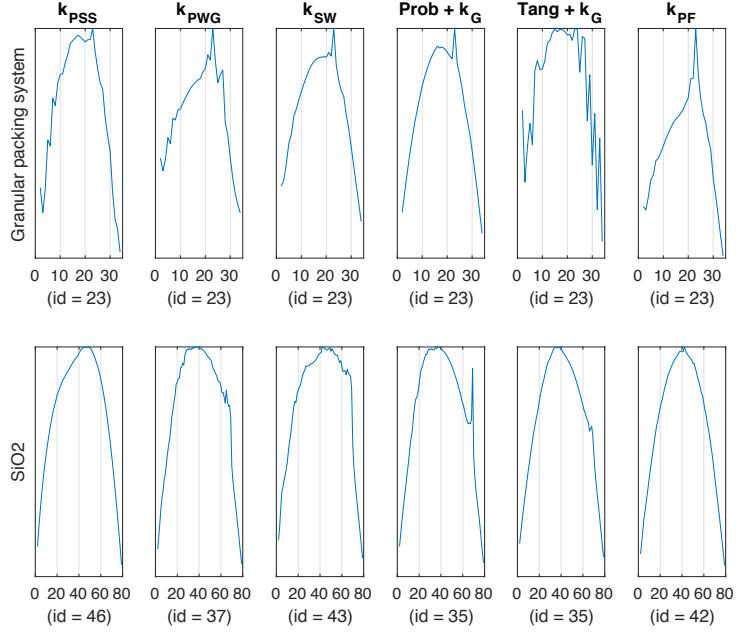}
  \end{center}
  \vspace{-10pt}
  \caption{The kernel Fisher discriminant ratio (KFDR) graphs.}
  \label{fg:KFDR}
  \vspace{-10pt}
\end{figure}

\subsubsection*{Acknowledgments}
We thank Ha Quang Minh, and anonymous reviewers for their comments. TL acknowledges the support of JSPS KAKENHI Grant number 17K12745. MY was supported by the JST PRESTO program JPMJPR165A.

\appendix

\section{Some Traditional Filtrations for Persistence Diagrams}

\begin{figure}[ht]
\vskip 0.2in
\begin{center}
\centerline{\includegraphics[width=0.6\columnwidth]{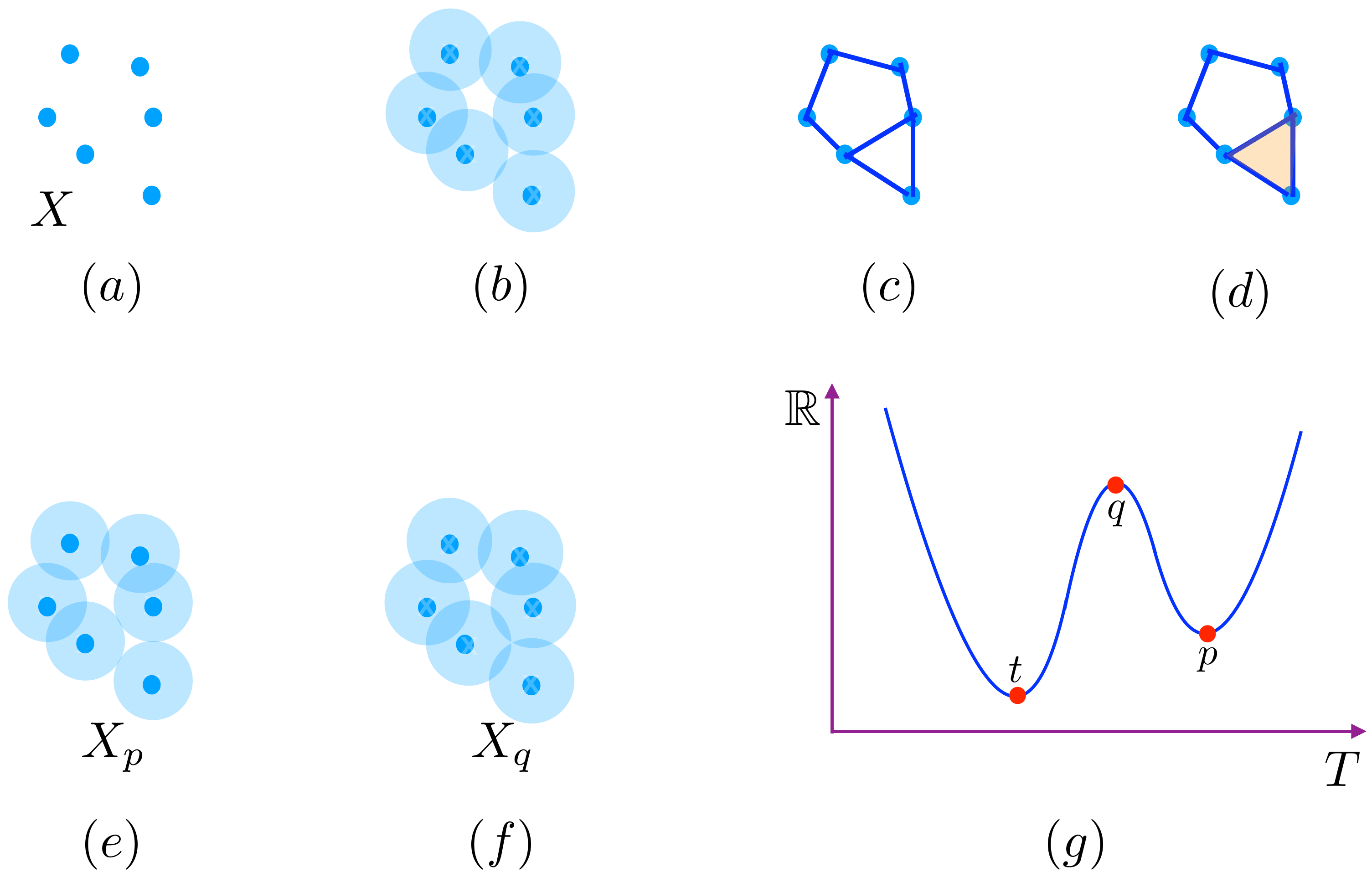}}
\caption{An illustration for persistence diagrams with some popular filtrations. (a) A set of points as an input. (b) A ball model filtration. (c) Cech complex filtration. (d) Vietoris-Rips complex filtration (it has only $1$ ring since it contains a $2$-simplex, illustrated as the orange triangle). (e) An illustration of a birth of a ring in the ball model filtration. (f) An illustration of a death for a ring in the ball model filtration. (g) A sub-level set filtration (a connected component has a birth at $\mathfrak{F}_p$, and a death at $\mathfrak{F}_q$). In this illustration, both the ball model filtration and Cech complex filtration have $2$ rings, but there is only $1$ ring for Vietoris-Rips complex filtration due to the 2-simplex. For sub-level set filtration, there are 2 connected components $(p, q)$ and $(t, \infty)$. Hence, the persistence diagram of $0$-dimension topological feature is that $\Dg = \left\{ (p, q); (t, \infty)\right\}$.}
\label{fg:Persistence}
\end{center}
\vskip -0.2in
\end{figure}

We provide some traditional filtrations to illustrate persistence diagrams as follows,

\paragraph{Ball model filtration.} Let $X = \left\{ x_1, x_2, ..., x_m \right\}$ be a finite set in a metric space as in Figure \ref{fg:Persistence} (a), and $B(x, a)$ be a ball with a center $x$ and a radius $a$. We denote $X_{a} := \cup_{x_i \in X} B(x_i, a)$ for $a \ge 0$. For $a<0$, we define $X_{a} := \emptyset$. Therefore, $\left\{X_a \mid a \in \RR \right\}$ can be used as a filtration, illustrated in Figure \ref{fg:Persistence} (b). For example, Figure \ref{fg:Persistence} (e) shows a birth for a ring at $X_{p}$ while Figure \ref{fg:Persistence} (f) illustrates that the ring is death at $X_{q}$. Therefore, a point $(p, q)$ is in the persistence diagram of $1$-dimensional topological feature for the set $X$.

\paragraph{Cech complex filtration.} Given a set $X = \left\{ x_1, x_2, ..., x_m \right\}$ in a metric space $(T, d_T)$. For $a \ge 0$, we form a $t$-simplex from a $(t + 1)$-point subset $X_{t+1}$ of $X$ if there exist $x' \in M$, such that $d_T(x, x')\le a, \forall x \in X_{t+1}$. The set of all these simplices is called the Cech complex of $X$ with parameter $a \ge 0$, denoted as $\mathfrak{C}(X, a)$. For $a < 0$, we definite $\mathfrak{C}(X, a) := \emptyset$. Therefore, $\left\{ \mathfrak{C}(X, a) \mid a \in \RR \right\}$ can be considered as a filtration and illustrated in Figure \ref{fg:Persistence} (c). When $T \subset \RR^{q}$, the topology of $\mathfrak{C}(X, a)$ is homotopy equivalent to $X_a$ \citep{hatcher2002algebraic} (p. 257). Consequently, the persistence diagrams with Cech complex filtration equals to the persistence diagrams with ball model filtration.

\paragraph{Vietoris-Rips complex (a.k.a. Rips complex) filtration.} Given a set $X = \left\{ x_1, x_2, ..., x_m \right\}$ in a metric space $(T, d_T)$. For $a \ge 0$, we form a $t$-simplex from a $(t + 1)$-point subset $X_{t+1}$ of $X$ which satisfies $d_T(x, z)\le 2a, \forall x, z \in X_{t+1}$. The set of all these simplices is called Vietoris-Rips complex of $X$ with parameter $a \ge 0$, denoted as $\mathfrak{R}(X, a)$. For $a < 0$, we define $\mathfrak{R}(X, a) = \emptyset$. Therefore, $\left\{ \mathfrak{R}(X, a) \mid a \in \RR \right\}$ can be used as a filtration as illustrated in Figure \ref{fg:Persistence} (d).

\paragraph{Sub-level set filtration.}
Let $T$ be a topological space, given a function $\mathfrak{f} : T \to \RR$ as an input, and defined a sub-level set $\mathfrak{F}_a := \mathfrak{f}^{-1}\left( ( - \infty, a ]\right)$. Thus, $\left\{ \mathfrak{F}_a \mid a \in \RR \right\}$ can be regarded as a filtration as in Figure \ref{fg:Persistence} (g). For example, it is easy to see that a connected component has a birth at $\mathfrak{F}_p$ and it is death at $\mathfrak{F}_q$ as in Figure \ref{fg:Persistence} (g). Thus, a point $(p, q)$ is in persistence diagrams of $0$-dimensional topological feature for the given function $\mathfrak{f}$. In Figure \ref{fg:Persistence} (g), persistence diagram of $0$-dimensional topological feature for $\mathfrak{f}$ is $\Dg = \left\{ (p, q); (t, \infty)\right\}$.

\section{Kernels}\label{subsec:Kernel}

We review some important definitions and theorems about kernels.

\paragraph{Positive definite kernels.} A function $k: X \times X \rightarrow \RR$ is called a positive definite kernel if $\forall n \in \NN^{*}, \forall x_1, x_2, ..., x_n \in X$, $\sum_{i, j} c_i c_j k(x_i, x_j) \ge 0$, $\forall c_i \in \RR$.

\paragraph{Negative definite kernels.} A function $k: X \times X \rightarrow \RR$ is called a negative definite kernel if $\forall n \in \NN^{*}, \forall x_1, x_2, ..., x_n \in X$, $\sum_{i, j} c_i c_j k(x_i, x_j) \le 0$, $\forall c_i \in \RR$ such that $\sum_i c_i = 0$.


\paragraph{Berg-Christensen-Ressel Theorem.}
In \citep{berg1984harmonic} (Theorem 3.2.2, p.74), if $\kappa$ is a \textit{negative definite} kernel, then $k_{t}(x, z) := \exp{\left(- t \kappa(x, z)\right)}$ is a positive definite kernel for all $t > 0$. For example, Gaussian kernel $k_{t}(x, z) = \exp{\left(- t \norm{x - z}^2_2\right)}$ is positive definite since it is easy to check that squared Euclidean distance is indeed a negative definite kernel\footnote{$\forall n \in \NN^{*}, \forall x_1, x_2, ..., x_n \in X$, and $\forall c_i \in \RR$ such that $\sum_i c_i = 0$, we have $\sum_{i, j} c_i c_j \norm{x_i - x_j}^2_2 = \sum_i c_i x_i^2 \sum_j c_j + \sum_i c_i \sum_j c_j x_j^2 - 2\sum_{i, j} c_i c_j x_i x_j = - 2\left(\sum_i c_i x_i \right)^2 \le 0$ .}.

\paragraph{Schoenberg Theorem.} In \citep{schoenberg1942positive} (Theorem 2, p. 102), a function $f(\left<\cdot, \cdot\right>)$ defined on the unit sphere in a Hilbert space is positive definite if and only if its Taylor series expansion has only nonnegative coefficients,
\begin{equation}
f(\xi) = \sum_{i=0}^{\infty} a_i \xi^{i}, \quad \text{with} \, a_i \ge 0.  
\end{equation}

\section{Related Kernels for Persistence Diagrams}\label{sec:Related}

\paragraph{Persistence Scale Space kernel ($k_{\PSS}$).} \cite{reininghaus2015stable} proposed the Persistence Scale Space (PSS) kernel, motivated by a heat diffusion problem with a Dirichlet boundary condition. The PSS kernel between two PDs $\Dg_i$ and $\Dg_j$ is defined as $k_{\PSS}\left( \Dg_i, \Dg_j \right) :=  \frac{1}{8 \pi \sigma} \sum_{\substack{p_i \in \Dg_i \\ p_j \in \Dg_j}} \exp\left( - \frac{\norm{p_i - p_j}_2^2}{8\sigma} \right) - \exp\left( - \frac{\norm{p_i - \bar{p}_{j}}_2^2}{8\sigma} \right)$, where $\sigma$ is a scale parameter and if $p = (a, b)$, then $\bar{p} = (b, a)$, mirrored at the diagonal $\Delta$. The time complexity is $O(N^2)$ where $N$ is the bounded cardinality of PDs. By using the Fast Gauss Transform \citep{greengard1991fast} for approximation with bounded error, the time complexity can be reduced to $O(N)$.

\paragraph{Persistence Weighted Gaussian kernel ($k_{\PWG}$).} \cite{kusano2016persistence} proposed the Persistence Weighted Gaussian (PWG) kernel by using kernel embedding into the reproducing kernel Hilbert space. Let $k_{G_{\sigma}}$ be the Gaussian kernel with a positive parameter $\sigma$, and associated reproducing kernel Hilbert space $\Hh_{\sigma}$. Let $\mu_i := \sum_{p \in \Dg_i} \arctan \left(C \pers(p)^q \right) k_{G_{\sigma}}(\cdot, p) \in \Hh_{\sigma}$, where $C, q$ are positive parameter, and for $p = (a, b)$, a persistence of $p$ is that $\pers(p) := b - a$. Let $\mu_j$ be defined similarly for $\Dg_j$. Given a parameter $\tau > 0$, the persistence weighted Gaussian kernel is defined as $k_{\PWG}(\Dg_i, \Dg_j) := \exp\left( - \frac{\norm{\mu_i - \mu_j}^2_{\Hh_{\sigma}}}{2\tau^2} \right)$. The time complexity is $O(N^2)$. Furthermore, \cite{kusano2016persistence} also proposed to use the random Fourier features \citep{rahimi2008random} for computing the Gram matrix of $m$ persistent diagrams with $O(m N u + m^2 u)$ complexity, where $u$ is the number of random variables using for random Fourier features. Thus, the time complexity can be reduced to be linear in $N$. 

\paragraph{Sliced Wasserstein kernel ($k_{\SW}$).} \cite{carriere17asliced} proposed the Sliced Wasserstein (SW) kernel, motivated from Wasserstein geometry for PDs. However, it is well-known that the Wasserstein distance is not negative definite. Therefore, it may be necessary to \textit{approximate} the Wasserstein distance to design positive definite kernels on Wasserstein geometry for PDs. Indeed, \cite{carriere17asliced} use the SW distance, which is an approximation of Wasserstein distance, for proposing the positive definite SW kernel, defined as $k_{\SW}(\Dg_i, \Dg_j) := \exp\left( - \frac{d_{\SW}(\Dg_i, \Dg_j)}{2\sigma^2}\right)$. The time complexity for the SW distance $d_{\SW}(\Dg_i, \Dg_j)$ is $O(N^2\log N)$, and for its $M$-projection approximation, it is $O(M N \log N)$.

\paragraph{Metric preservation.} For those kernel methods for PDs, only the SW kernel preserves the metric between PDs, that is the Wasserstein geometry. Furthermore, \cite{carriere17asliced} argued that this property should lead to improve the classification power. In this work, we explore an alternative Riemannian manifold geometry for PDs, namely the Fisher information metric which is also known as a particular pull-back metric on Riemannian manifold \citep{le2015unsupervised}. Moreover, the proposed positive definite Persistence Fisher kernel is directly built upon the Fisher information metric for PDs \textit{without approximation} while it may be necessary to approximate the Wasserstein distance for designing positive definite kernels on Wasserstein geometry for PDs. Additionally, the time complexity of the Persistence Fisher kernel is also better than the Sliced Wasserstein kernel in term of computation.

\section{More Experiments on Hemoglobin Classification}

We evaluated the Persistence Fisher kernel on Hemoglobin classification for the \textit{taunt} and \textit{relaxed} forms \citep{cang2015topological}. For each form, there are $9$ data points, collected by the X-ray crystallography. As in \citep{kusano2017kernelJMLR}, we selected $1$ data point from each class for test and used the rest for training. There are totally $81$ runs. We also compared with the molecular topological fingerprint (MTF) for SVM \citep{cang2015topological}. We summarize averaged accuracy results on SVM in Table \ref{tb:Hemoglobin}. The Persistence Fisher kernel again outperformances other baseline kernels, and also SVM with MTF.

\begin{table*}[]
\centering
\caption{Averaged accuracy results (\%) on SVM classification. The result of MTF with SVM is cited from \citep{cang2015topological}.}
\label{tb:Hemoglobin}
\begin{tabular}{|c|c|c|c|c|c|c|}
\hline
                           & MTF & $k_{\PSS}$ & $k_{\PWG}$ & $k_{\SW}$ & Prob+$k_G$ & \textbf{\boldmath{$k_{\PF}$}} \\ \hline
Accuracy ($\%$) & $84.50$ & $83.33$    & $88.89$    & $88.89$   & $83.95$         & \textbf{97.53}   \\ \hline
\end{tabular}
\end{table*}

\bibliographystyle{plainnat}
\bibliography{MLREF}

\end{document}